%% 
%% Copyright 2019-2020 Elsevier Ltd
%% 
%% This file is part of the 'CAS Bundle'.
%% --------------------------------------
%% 
%% It may be distributed under the conditions of the LaTeX Project Public
%% License, either version 1.2 of this license or (at your option) any
%% later version.  The latest version of this license is in
%%    http://www.latex-project.org/lppl.txt
%% and version 1.2 or later is part of all distributions of LaTeX
%% version 1999/12/01 or later.
%% 
%% The list of all files belonging to the 'CAS Bundle' is
%% given in the file `manifest.txt'.
%% 
%% Template article for cas-sc documentclass for 
%% single column output.

\documentclass[a4paper,fleqn,longmktitle]{cas-sc}

\usepackage[numbers]{natbib}
\pdfoutput=1
\usepackage{subfigure}
\usepackage{graphicx}
\usepackage{caption}
\usepackage{booktabs} 
\newtheorem{definition}{Definition}[section]
\newtheorem{proposition}{Proposition}[section]
\newtheorem{proof}{Proof}[section]
%%%Author macros
\def\tsc#1{\csdef{#1}{\textsc{\lowercase{#1}}\xspace}}
\tsc{WGM}
\tsc{QE}
\tsc{EP}
\tsc{PMS}
\tsc{BEC}
\tsc{DE}
%%%

\begin{document}

\def\textpagefraction{.001}
\shorttitle{}
\shortauthors{Bingxue Wu et~al.}
%\begin{frontmatter}

\title [mode = title]{ Entropy Regularized Iterative Weighted Shrinkage-Thresholding Algorithm (ERIWSTA): An Application to CT Image Restoration}

%\tnotetext[2]{The second title footnote which is a longer text matter
%   to fill through the whole text width and overflow into
%   another line in the footnotes area of the first page.}

\author[mymainaddress]{Bingxue Wu}

\author[mymainaddress]{Jiao Wei}
\author[mymainaddress]{Chen Li}
\author[mythirdaryaddress]{Yudong Yao}

\author[mymainaddress, mysecondaryaddress]{Yueyang Teng\corref{mycorrespondingauthor}}
\cortext[mycorrespondingauthor]{Corresponding author}
\ead{tengyy@bmie.neu.edu.cn}

\address[mymainaddress]{College of Medicine and Biological Information Engineering, Northeastern University, Shenyang  110169, China}
\address[mysecondaryaddress]{Key Laboratory of Intelligent Computing in Medical Image, Ministry of Education, Shenyang  110169, China}
\address[mythirdaryaddress]{Department of Electrical and Computer Engineering, Stevens Institute of Technology, Hoboken, NJ 07102, USA}

%\indent
\begin{abstract}
The iterative weighted shrinkage-thresholding algorithm (IWSTA) has shown superiority to the classic unweighted iterative shrinkage-thresholding algorithm (ISTA) for solving linear inverse problems, which address the attributes differently. This paper proposes a new entropy regularized IWSTA (ERIWSTA) that adds an entropy regularizer to the cost function to measure the uncertainty of the weights to stimulate attributes to participate in problem solving. Then, the weights are solved with a Lagrange multiplier method to obtain a simple iterative update. The weights can be explained as the probability of the contribution of an attribute to the problem solution. Experimental results on CT image restoration show that the proposed method has better performance in terms of convergence speed and restoration accuracy than the existing methods.
\end{abstract}

\begin{keywords}
Compressive sensing \sep Entropy regularizer \sep Linear inverse problem \sep Sparsity\sep Iterative weighted shrinkage-thresholding algorithm (IWSTA)
\end{keywords}

\maketitle

\section{Introduction}
For sparse compressed signals, Donoho $et~al.$ \cite{2005Stable} proposed a compressive sensing theory that enables efficient data sampling at a much lower rate than the requirements, which can be modeled as follows in its standard formulation.

$\bf{Notations}$: In this paper, the matrices are represented in capital letters. For a matrix $A$, $A_{*i}$, $A_{i*}$ and $A_{ij}$ denote the $i-th$ column, the $i-th$ row and $(i,j)-th$ element of $A$, respectively; the $\|\dot\|_i$ represents the $i$-norm of a vector. All the vectors are column vectors unless transposed
to a row vector by a prime superscript $T$.

Compressive sensing can be formulated as:
\begin{eqnarray}
	b = Ax+\epsilon
\end{eqnarray}
where $x\in R^n$ is an unknown vector, $b\in R^m$ is an observed vector, $A\in R^{m\times n}$ is called the compressive sensing matrix (usually $m<<n$), and $\epsilon$ is the unknown disturbance term or noise. Obviously, this is an underdetermined system of equations that does not have a sole solution. The least-squares method is usually used to solve the problem.
\begin{equation}
	\min \limits_x  \frac{1}{2}\|Ax - b\|_2^2 
\end{equation}

To suppress overfitting, some scholars \cite{2010On,2010Sparse,1999Sparse,2017A} added the $L_0$-norm regularizer to introduce sparse prior information.
\begin{equation}\label{eq:L0}
	\min\limits_x \frac{1}{2}\| Ax - b\|_2^2 + \beta\|x\|_0
\end{equation}
where $\|x\|_0$ denotes the number of nonzero components of $x$ and $\beta>0$ is a hyperparameter to control the tradeoff between accuracy and sparsity. Many methods have been developed to solve this problem, such as the penalty decomposition method \cite{2012Sp}, iterative hard threshold method \cite{2008Iterative}, fixed-point continuation method (FPC) \cite{2008Fixed}, approximate gradient homotopy method (PGH) \cite{2012A} and reweighted $L_0$ minimization method \cite{6743943,2015Zhao}.

However, Eq. (\ref{eq:L0}) is an NP-hard optimization problem \cite{1995Sparse}, which is highly discrete so that it is difficult to solve using a precise algorithm. Thus, we need to seek an effective approximation solution for this problem. The $L_1$-norm regularizer is introduced as a substitute for that of the $L_0$-norm. Such an approximation can be traced back to a wide range of fields, such as seismic traces \cite{1979Deconvolution}, sparse signal recovery \cite{2001Atomic}, sparse model selection in statistics (LASSO) \cite{1996Regression}, and image processing \cite{1970Total}. Many scholars have attempted to find the optimal solution to the following problem:
\begin{equation}
	\min\limits_x \frac{1}{2}\|Ax - b\|_2^2 + \beta \| x \|_1
\end{equation}
It is a convex continuous optimization problem with a sole nondifferentiable point ($x=0$), which can usually be transformed into a second-order cone programming problem and then solved by methods such as interior-point methods. However, in large-scale problems, due to the high algorithmic complexity, the interior-point method is very time-consuming. Based on this, many researchers have solved the problem through simple gradient-based methods. Among them, the iterative shrinkage-thresholding algorithm (ISTA) proposed by Chambolle $et~al.$ \cite{Chambolle1998,2003An} has attracted much attention. ISTA updates $x$ through a shrinkage/soft threshold operation in each iteration.
\begin{equation}
	x^{k + 1} = soft_{\beta t}[x^k - 2tA^T( {Ax^k - b} )]
\end{equation}
where $k$ represents the $k$-th iteration, $t$ is an appropriate stepsize and $soft$ is the soft threshold operation function.
\begin{equation} 
	soft_\theta (x_i) = sign(x_i)( \|x_i\| - \theta )
\end{equation}

Recently, the iteratively weighted shrinkage-thresholding algorithm (IWSTA) has attracted much interest compared with ISTA, which outperforms their unweighted counterparts in most cases. In these methods, decision variables and weights are optimized alternatingly, or decision variables are optimized under heuristically chosen weights. It can be written as:
\begin{equation}
	\min \limits_{x,~w\geq0} \frac{1}{2}\| Ax - b\|_2^2 + \beta \sum\limits_{i = 1}^n w_i |x_i|_1 
\end{equation}

The method assigns different weights to each component of $x$ in the iterative process and then updates $x$. In this way, each subproblem is convex and easy to solve. Many algorithms have been developed to solve it. For example, the iterative support detection (ISD) method \cite{2009Wang} assigns a weight of 0 to components in the support set and a weight of 1 to the other components during iteration, in which the support set at each iteration consists of all components whose absolute value is greater than the threshold. Zhao $et~al.$ \cite{2015Zhao} proposed a new method to calculate the optimal $w$ by the duality of linear programming based on the property of weighted range space. It alternately solves the weighted original problem with fixed weights to obtain a new solution $x$, and then it solves the duality problem to obtain a new weight $w$. More variants are available in \cite{6743943,David2006For} and its references. The details of some examples are listed in Tab. \ref{tab:existmethods}.

\begin{table}[width=.9\linewidth,cols=4,pos=h]\label{tab:exist}
	\caption{Variants of weighted method.}
	\label{tab:existmethods}
	\begin{tabular*}{\tblwidth}{@{} LCCCCC@{} }
		\toprule
		Author & Termed &  Weights &Min.&Max. &  Regularizer\\
		\midrule		
		Chambolle $et~al.$ \cite{Chambolle1998} & ISTA & 1 & 1&1 &$\sum\limits_{i = 1}^n | x_i| $ \\
		
		Candes $et~al.$ \cite{EJ2008} & IRL1  & 
		$\frac{1}{|x_i^{k - 1}| + \delta }$& 0&$\frac{1}{\delta}$&  $\sum\limits_{i = 1}^n \log (| x_i| + \delta) $ \\
		
		Foucart $et~al.$ \cite{2009Sparsest}  & WLP &$\frac{1}{( {|x_i^{k - 1}| + \delta } )^{1 - p}}$& 0&$\frac{1}{\delta^{1-p}}$& $\frac{1}{p}\sum\limits_{i = 1}^n ( |x_i| + \delta ) ^p$ \\	
		
		Wipf $et~al.$ \cite{2010Iterative} & NW4 & $\frac{1+(|x^{k - 1}| + \delta)^{p + 1}}{( | x^{k - 1}| + \delta )}^{p + 1}$ & 0&1&$\sum\limits_{i = 1}^n ( |x_i | - \frac{1}{(x_i+ \delta)^p})$ \\
		\bottomrule
	\end{tabular*}
\end{table}

There is a drawback for the above methods: the weights do not meet the usual definition of weights, and their sum is one, which leads them to be distributed in a very large range (see Tab. \ref{tab:existmethods}). Such weights are difficult to explain and can lead to an inaccurate result.

This paper proposes a new IWSTA type, called entropy regularized IWSTA (ERIWSTA), which obtains easily computable and interpretable weights. The weights automatically fall in the range of [0, 1], and the summation is
one so that they can be considered a probability of the contribution of each attribute to the model.
This is achieved by adding an entropy regularizer
to the cost function and then using the Lagrange multiplier
method to solve the problem. Experiments are executed for CT image restoration, and the results show that the proposed algorithm performs better in terms of both convergence speed and restoration accuracy compared with some state-of-the-art methods.

\section{Methodology}
The main idea of the IWSTA type algorithms is to define a weight for each attribute based on the current iteration ${x^k}$ and then use them to obtain a new $x$. In this section, we introduce an entropy regularizer to the cost function and obtain the following optimization model:
\begin{eqnarray}\label{eq:mainmodel}
	\min &&\Phi _{\beta ,\gamma }(x,w) = F(x)+ \beta G_{\gamma}(x,w)\nonumber\\
	s.~t. && w_i\geq 0, ~\sum_{i=1}^n w_i =1\nonumber\\
	where&&F(X) = \frac{1}{2}\|Ax-b\|_2^2\nonumber\\
	&&G_{\gamma}(x,w)=\sum_{i=1}^n w_i |x_i|+\gamma\sum_{i=1}^n  {w_i}\ln {w_i} 
\end{eqnarray}
where
$\gamma\geq0$ is a given hyperparameter.

As can be seen, while we do not use the entropy regularizer, $w$ can easily be solved as $w_i=1$ if $x_i=\arg\min\{|x_1|, ..., |x_n|\}$, or 0 otherwise$\footnote{The update rule can be easily explained by an example as \begin{equation}
		\begin{aligned}
			\min~\{4, -1, 5\}= min&~4w_1-1w_2+5w_3\\ 
			s.~t.&~ w_1, w_2, w_3\ge0\\
			&~w_1+w_2+w_3=1\nonumber
			\label{eq9}
		\end{aligned}
	\end{equation}
	The solution is $w_1=0$, $w_2=1$ and $w_3=0$, in which $w_2$ corresponds to the minimum value of \{4, -1, 5\}. It is very similar to the computation of the weights in the k-means algorithm. }$. It shows a simple fact that only one element of $w$ is 1, and the others are 0, which is grossly incompatible with the actual problem. Then, we add the negative entropy of the weights to measure the uncertainty of
weights and stimulate more
attributes to help signal reconstruction because it is well known that $\sum_{i=1}^n  {w_i}\ln {w_i}$ is minimized in information theory when
\begin{equation}
	w_1=w_2=...=w_n
\end{equation}
As follows, we will alternatively solve $w$ and $x$
in Eq. (\ref{eq:mainmodel}).

\subsection{Update rule for $w$}
To solve $w$, we introduce the Lagrange
multipliers $\lambda$ and then obtain the following Lagrange function.
Note that $F(x)$ is a constant with respect to $W$, so we only construct the Lagrange function on $G(x)$.
\begin{equation}
	L(w,\lambda)  =  G_{\gamma}(x,w) +  \lambda(\sum_{i = 1}^n w_i - 1),
\end{equation}
Set the partial derivative of $L(w,\lambda) $ with respect to $w_i$ and $\lambda$ to zero and then obtain the following two equations.
\begin{eqnarray}
	\frac{\partial L(w,\lambda)}{\partial w_i}&=& |x_i| + \gamma (1 + \ln {w_i})+\lambda = 0\label{eq:Lag1} \\
	\frac{\partial L(w,\lambda)}{\partial \lambda}&=&	\sum_{i = 1}^n w_i - 1=0\label{eq:Lag2}
\end{eqnarray}
From Eq. (\ref{eq:Lag1}), we know that
\begin{equation}\label{eq:wi}
	w_i =  \exp(- \frac{\lambda}{\gamma})\exp(-\frac{|x_i|}{\gamma}) 
\end{equation}
Substituting Eq. (\ref{eq:wi}) into Eq. (\ref{eq:Lag2}), we have
\begin{equation}
	\sum_{i = 1}^n {w_i}  = \exp(- \frac{\lambda}{\gamma})\sum_{i = 1}^n \exp(-\frac{|x_i|}{\gamma})  = 1
\end{equation}
It follows that
\begin{equation}
	\exp(- \frac{\lambda}{\gamma})=\frac{1}{\sum_{i = 1}^n \exp(-\frac{|x_i|}{\gamma})} 
\end{equation}
Substituting this expression to Eq. (\ref{eq:wi}), we obtain that

\begin{equation} 
	w_i = \frac{\exp(-\frac{|x_i|}{\gamma})}{\sum_{l = 1}^n \exp(-\frac{|x_l|}{\gamma})}
\end{equation}
Such weights certainly meet the constraints that $w_i\geq0$ and $\sum_{i = 1}^n w_i=1$.

\subsection{Update rule for $x$}
Inspired by the work of ISTA \cite{2009A}, a similar approach was adopted for the iterative update of $x$.
The construction of a  majorization is an important step in
obtaining the updating rule.
\begin{definition}\label{definition:surrogate}(Majorization)
	Denote $\psi(x|x^k)$ as a majorization for ${\Psi}(x)$ at $x^k$ (fixed)  if $\psi(x^k|x^k)={\Psi}(x^k)$ and
	${\psi}(x|x^k)\geq {\Psi}(x)$.
\end{definition}
Clearly, ${\Psi}(x)$ is nonincreasing under the updating rule $x^{k+1}=\min_x
\psi(x|x^k)$ because
\begin{eqnarray}
	{\Psi}(x^{k+1})\leq \psi(x^{k+1}|x^k)\leq \psi(x^k|x^k)={\Psi}(x^k)
\end{eqnarray}

Then, we can construct the  majorization for $F(x)$.
\begin{proposition}
	Obviously, $F(x)$ is a Lipschitz continuous and differentiable convex function, which has a  majorization function at fixed current iteration $x^k$ as
	
	\begin{eqnarray}
		f(x,x^k) = F(x^k)+[\nabla F(x^k)]^T(x-x^k)+\frac{L}{2}\|x-x^k\|_2^2
	\end{eqnarray}
	where $L$ is larger than or equal to the maximum eigenvalue of $A^TA$.
\end{proposition}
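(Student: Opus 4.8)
The plan is to verify directly the two defining properties of a majorization given in Definition \ref{definition:surrogate}, applied to $\Psi = F$ and $\psi(\cdot\,|x^k)=f(\cdot\,,x^k)$: first that $f(x^k,x^k)=F(x^k)$, and second that $f(x,x^k)\geq F(x)$ for every $x$. The equality at $x^k$ is immediate, since substituting $x=x^k$ kills both the linear term $[\nabla F(x^k)]^T(x-x^k)$ and the quadratic term $\frac{L}{2}\|x-x^k\|_2^2$, leaving $f(x^k,x^k)=F(x^k)$.

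For the inequality I would exploit the fact that $F$ is \emph{quadratic}. Since $F(x)=\frac{1}{2}\|Ax-b\|_2^2$, we have $\nabla F(x)=A^T(Ax-b)$ and the constant Hessian $\nabla^2 F(x)=A^TA$. Consequently the second-order Taylor expansion of $F$ about $x^k$ is exact,
\[
F(x)=F(x^k)+[\nabla F(x^k)]^T(x-x^k)+\frac{1}{2}(x-x^k)^TA^TA(x-x^k),
\]
so that subtracting this from the definition of $f$ gives
\[
f(x,x^k)-F(x)=\frac{1}{2}(x-x^k)^T\bigl(LI-A^TA\bigr)(x-x^k).
\]

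The crux is then to show this quadratic form is nonnegative. Because $A^TA$ is symmetric positive semidefinite, its eigenvalues are real and nonnegative, and its largest eigenvalue satisfies $v^TA^TAv\leq\lambda_{\max}(A^TA)\,\|v\|_2^2$ for all $v$; hence whenever $L\geq\lambda_{\max}(A^TA)$ the matrix $LI-A^TA$ is positive semidefinite, which forces $f(x,x^k)-F(x)\geq 0$. This establishes both conditions and proves that $f(\cdot\,,x^k)$ majorizes $F$ at $x^k$. I do not anticipate a genuine obstacle: the only subtlety worth flagging is that the Taylor expansion here carries \emph{no} remainder term to bound, precisely because the Hessian of the least-squares term is constant. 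Equivalently, one could invoke the descent lemma for functions with $L$-Lipschitz gradient, noting that the Lipschitz modulus of $\nabla F$ equals $\lambda_{\max}(A^TA)$, so that the two routes coincide.
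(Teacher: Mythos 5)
Your proposal is correct and follows essentially the same route as the paper: both exploit the exact second-order Taylor expansion of the quadratic $F$, reduce the claim to comparing $(x-x^k)^TA^TA(x-x^k)$ with $L\|x-x^k\|_2^2$, and close the gap via the maximum eigenvalue of $A^TA$ (the paper via explicit orthogonal diagonalization, you via positive semidefiniteness of $LI-A^TA$, which is the same bound). Your explicit check of the equality condition $f(x^k,x^k)=F(x^k)$ is slightly cleaner than the paper's, but the argument is the same.
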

\begin{proof}
	It is well-known that
	\begin{equation}
		F(x) = \frac{1}{2}\|Ax-b\|_2^2=F(x^k)+[\nabla F(x^k)]^T(x-x^k)+\frac{1}{2}(x-x^k)^TA^TA(x-x^k)
	\end{equation}
	We compare $F(x)$ and $f(x,x^k)$ and find that only the last terms are different.
	By singular value decomposition (SVD) of a symmetric definite matrix, we know that $A^TA=Q^T\Sigma Q$, in which $Q$ is an orthogonal matrix consisting of all eigenvectors and $\Sigma$ is diagonal consisting of all eigenvalues. Let $z=x-x^k$, then
	\begin{equation}
		z^T(A^TA)z=z^TQ^T\Sigma Qz\leq L\|Qz\|_2^2=L\|z\|_2^2
	\end{equation}
	And it is also certain that $z^TA^TAz= L\|z\|_2^2=0$ if $x=x^k$. Thus, the proof is established.
\end{proof}
Now, we obtain the  majorization for the cost function $\Phi(x,w)$ on $x$.
\begin{equation}
	\phi(x,x^k)=f(x,x^k)+\beta G_{\gamma}(x,w) 
\end{equation}
which can be reorganized as
\begin{eqnarray}
	\phi(x,x^k)&=& \frac{L}{2}\| {x - [x^k - \frac{1}{L}\nabla F ( x^k )]} \|_2^2 + \beta G_{\gamma}(x,w)\nonumber\\
	&=&\sum_{i=1}^n\{\frac{L}{2}\| x_i - [x^k - \frac{1}{L}\nabla F ( {{x^k}}) ]_i \|_2^2 + \beta w_i|x_i|\}+constant
\end{eqnarray}
We find that the variables of the  majorization are separable such that their minimizations can be easily obtained on each $x_i$, respectively, as follows:
\begin{equation}
	x_i^{k + 1} = soft_{\beta  t w_i}[x^k - 2tA^T( {A{x^k} - y} )]
\end{equation}

\begin{figure*}
	\centering
	\includegraphics[scale=.4]{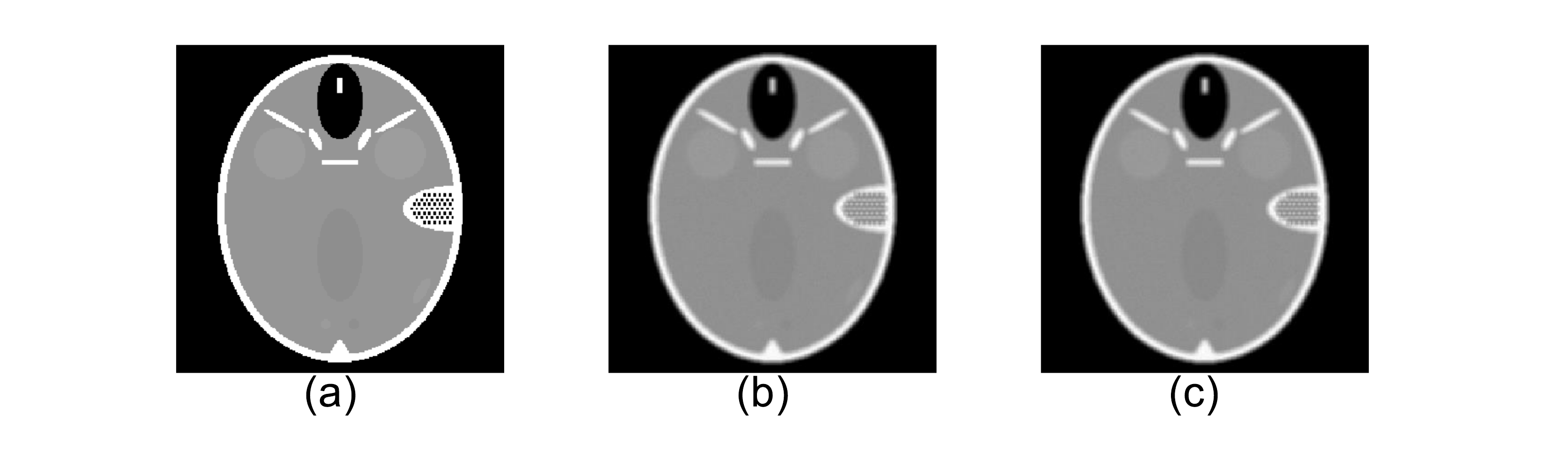}
	\caption{The original and noisy head phantom images. $(a)$ head phantom with 256×256 pixels; $(b)$ and $(c)$ blurred image with a 5$\times$5 uniform kernel and additive Gaussian noise with $\sigma=10^{ - 2}$ and $\sigma=10^{ - 3}$.}
	\label{FIG:1}
\end{figure*}

\begin{figure*}
	\subfigure[]{ 	
		\begin{minipage}[b]{0.5\textwidth} 
			\centering 
			\includegraphics[scale=.60]{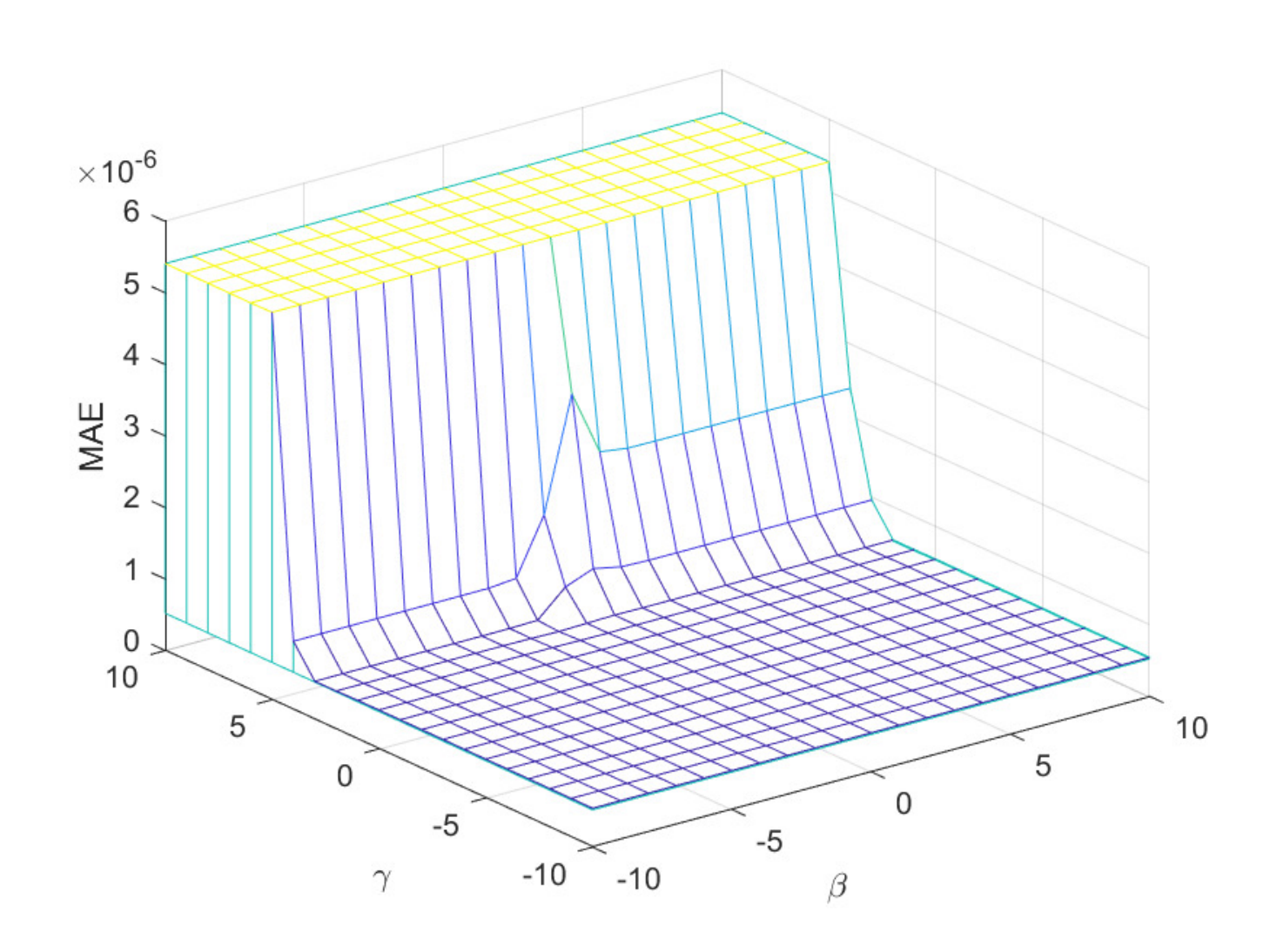} 
	\end{minipage}}% 
	\subfigure[]{ 	
		\begin{minipage}[b]{0.5\textwidth} 
			\centering 
			\includegraphics[scale=.60]{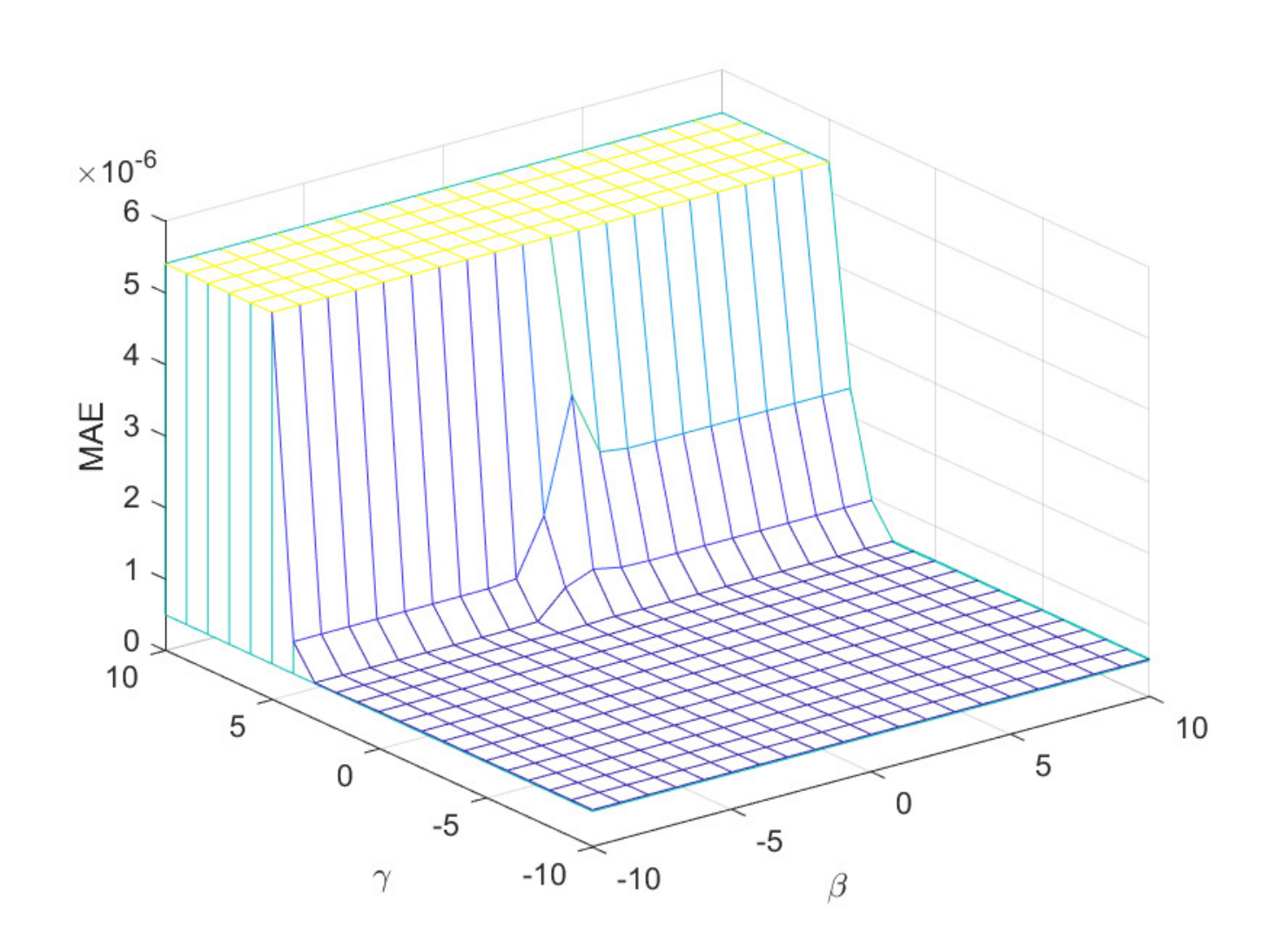} 
	\end{minipage}} 
	\caption{3D profile of $\beta$ and $\lambda$ on MAE with different Gaussian noise levels: (a) $\sigma=10^{-2}$ and (b) $\sigma=10^{-3}$.} 
	\label{FIG:hyperparameter} %% label for entire figure
\end{figure*}

\begin{table}[width=.9\linewidth,cols=4,pos=h]
	\caption{The optimal MAE value and corresponding hyperparameter (Gaussian noise with $\sigma=10^{-2}$).}
	\label{tbl2}
	\begin{tabular*}{\tblwidth}{@{} LLLLL@{} }
		\toprule
		Termed &  $\beta$    & $\gamma$      & $\delta$      & MAE\\
		\midrule
		ISTA   & ${10^{-3}}$ & $-$           & $-$           & ${5.312077*10^{-7}}$ \\
		
		WLP    & ${10^{-5}}$ & ${10^{ - 10}}$& ${10^{ - 3}}$ & ${5.228672*10^{-7}}$ \\
		
		NW4    & ${10^{-5}}$ & ${10^{ - 2}}$ & ${10^{ - 3}}$ & ${5.410231*10^{-7}}$ \\
		 
		IRL1   & ${10^{-5}}$ & $-$           & ${10^{ - 3}}$ & ${5.228672*10^{-7}}$ \\
		
		ERIWSTA  & ${10^{2}}$& ${10^{-2}}$   & $-$           & ${5.218246*10^{-7}}$  \\
		\bottomrule
	\end{tabular*}
\end{table}

\begin{table}[width=.9\linewidth,cols=4,pos=h]
	\caption{The optimal MAE value and corresponding hyperparameter (Gaussian noise with $\sigma=10^{-3}$).}
	\label{tbl3}
	\begin{tabular*}{\tblwidth}{@{} LLLLL@{} }
		\toprule
		Termed &  $\beta$    & $\gamma$      & $\delta$      & MAE\\
		\midrule
		ISTA   & ${10^{-3}}$ & $-$           & $-$           & ${5.122013*10^{-7}}$ \\
		
		WLP    & ${10^{-5}}$ & ${10^{ - 5}}$ & ${10^{ - 3}}$ & ${5.018339*10^{-7}}$ \\
		
		NW4    & ${10^{-5}}$ & ${10^{ - 2}}$ & ${10^{ - 3}}$ & ${5.410231*10^{-7}}$ \\
		
		IRL1   & ${10^{-5}}$ & $-$           & ${10^{ - 3}}$ & ${5.018340*10^{-7}}$ \\
		
		ERIWSTA  & ${10^{2}}$& ${10^{-2}}$   & $-$           & ${5.005524*10^{-7}}$  \\	
		\bottomrule
	\end{tabular*}
\end{table}

\begin{figure*}
	\centering
	\subfigure[]{
		\includegraphics[scale=.5]{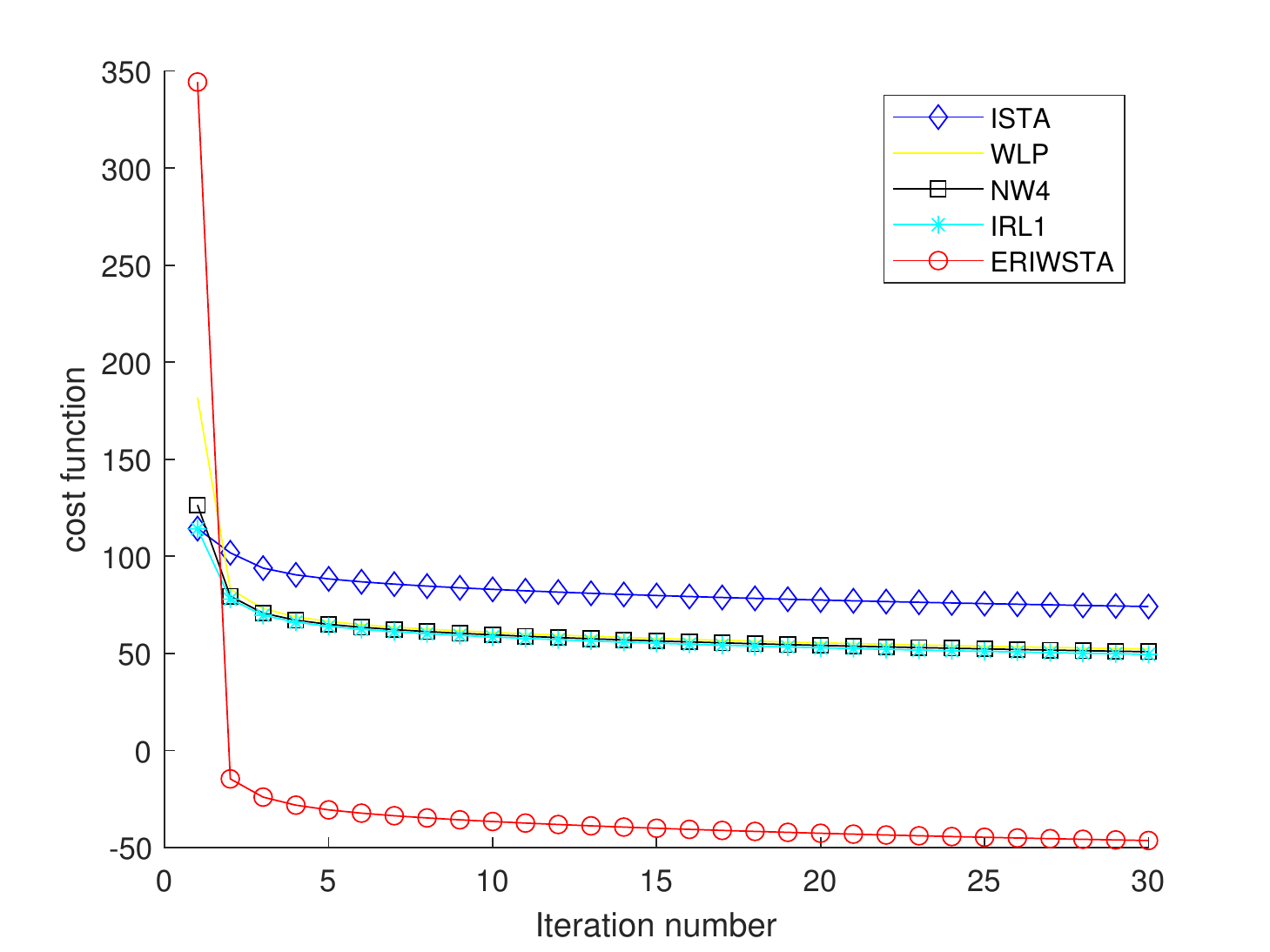}
	}
	\subfigure[]{
		\includegraphics[scale=.5]{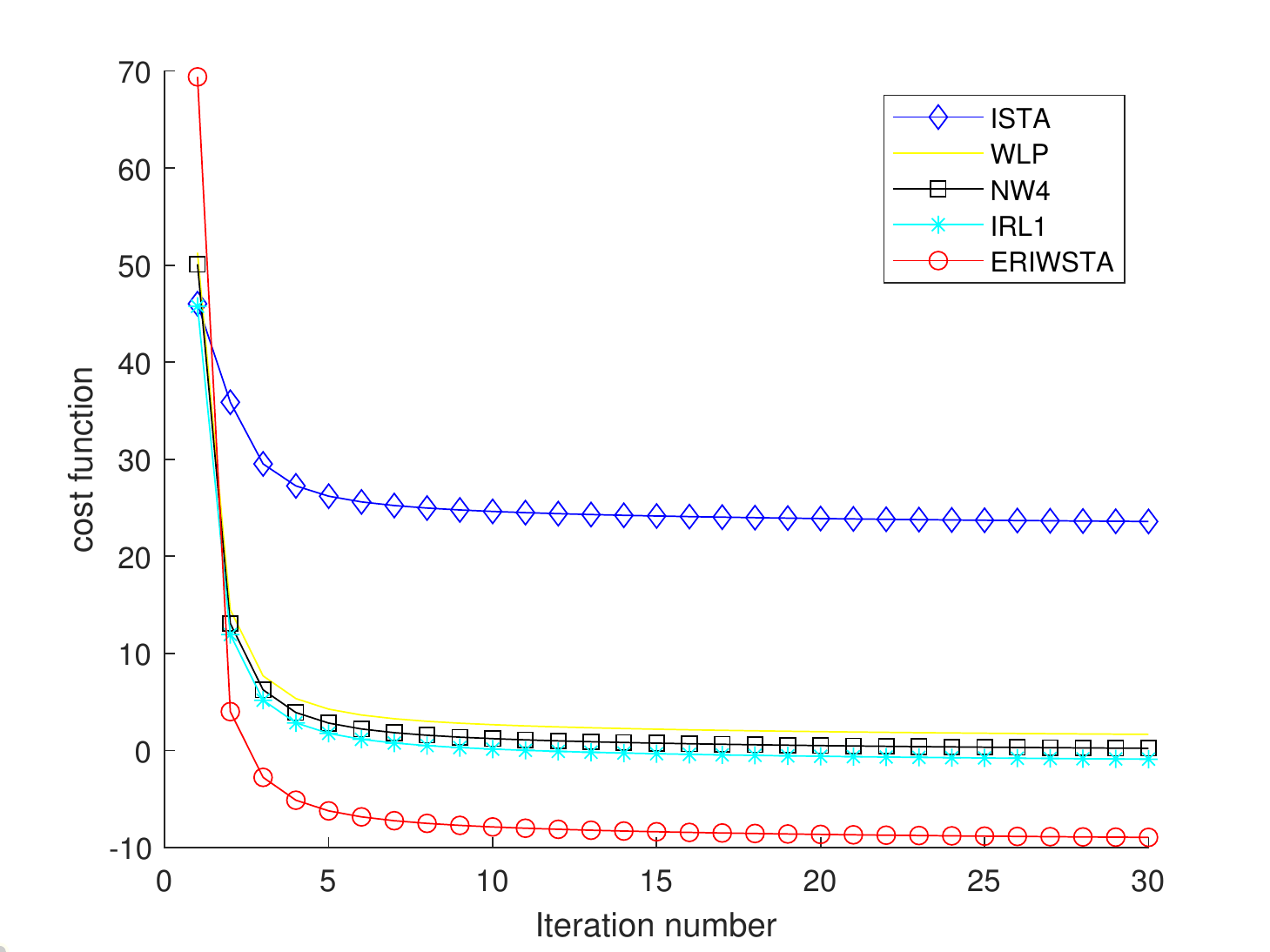} 
	}
	
	\caption{Cost function versus iteration number for different Gaussian noise levels: (a) $\sigma=10^{-2}$ and (b) $\sigma=10^{-3}$.}
	\label{FIG:CostCurve}
\end{figure*}

\begin{figure*}
	\centering
	\subfigure[]{
		\includegraphics[scale=.5]{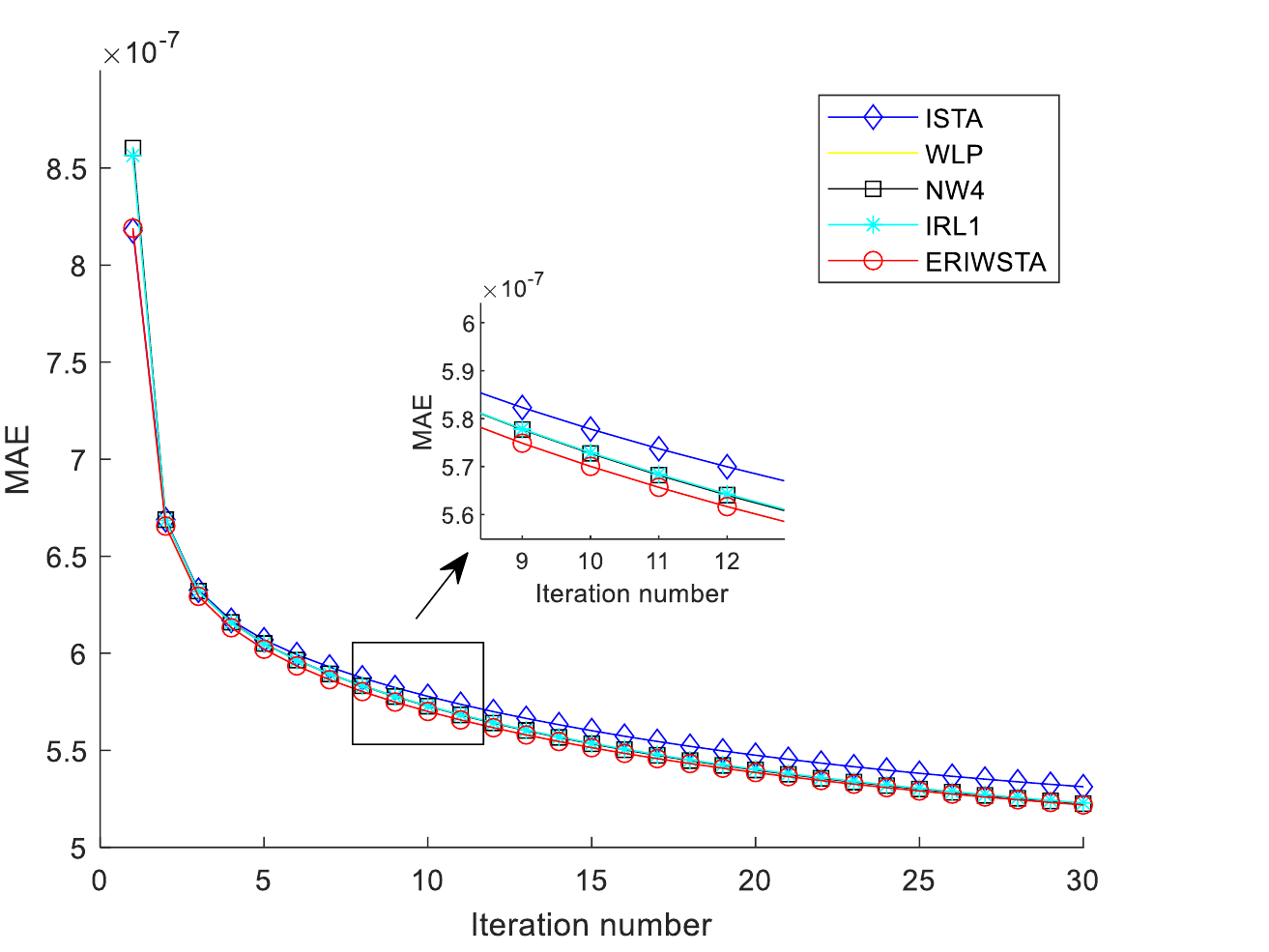}
	}
	\subfigure[]{
		\includegraphics[scale=.5]{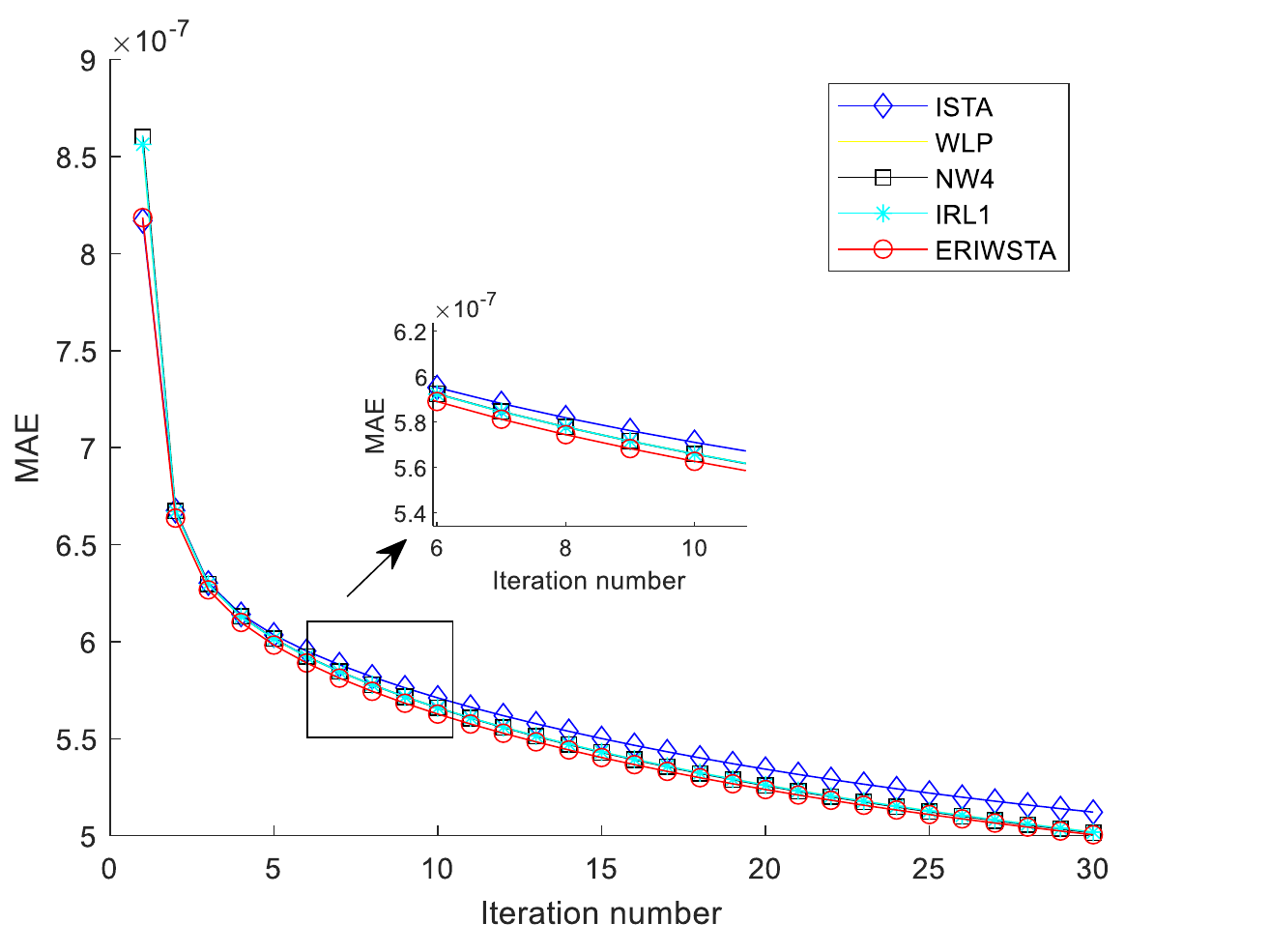} 
	}
	
	\caption{MAE versus iteration number for different Gaussian noise levels: (a) $\sigma=10^{-2}$ and (b) $\sigma=10^{-3}$.}
	\label{FIG:MAECurve}
\end{figure*}

\begin{figure}	 %% label for entire figure
	\centering 
	\includegraphics[scale=.75]{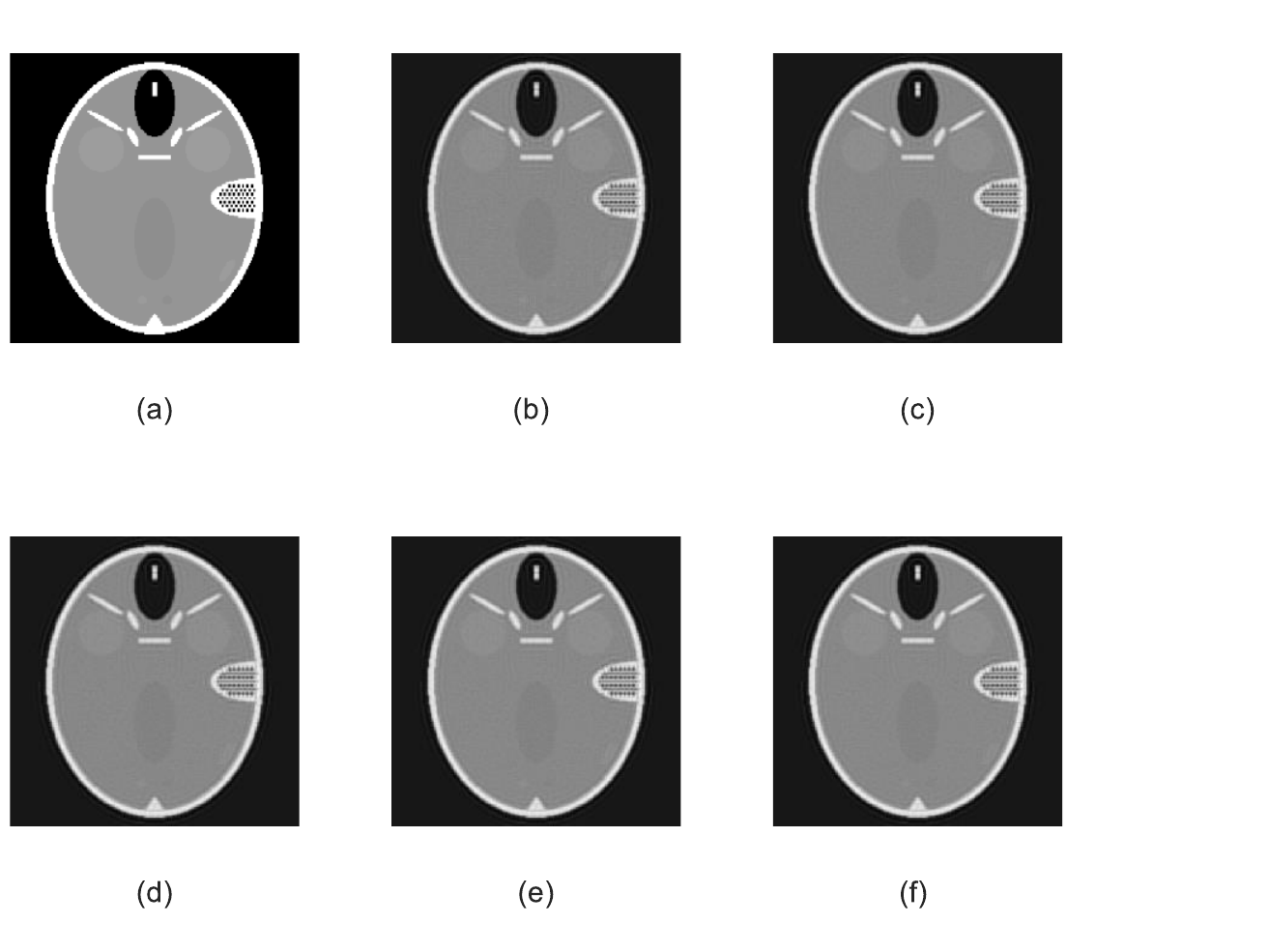} 
	\caption{After 30 iterations, the denoising results of ISTA, WLP, NW1, IRL1 and ERIWSTA with Gaussian noise with $\sigma=10^{-2}$.} 
	\label{FIG:ImageHighNoise}
\end{figure}

\begin{figure}
	\centering 
	\includegraphics[scale=.75]{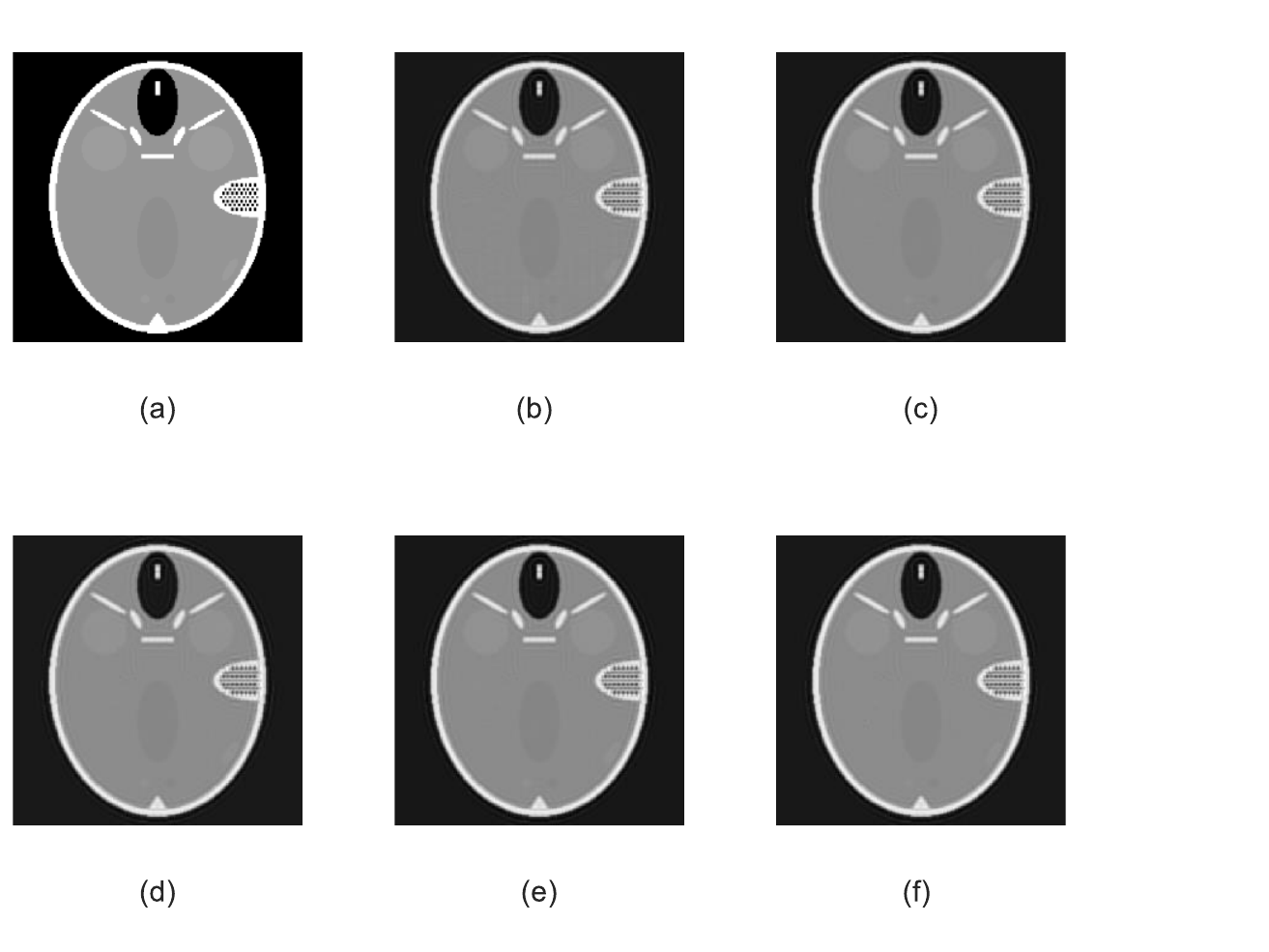} 
	\caption{After 30 iterations, the denoising results of ISTA, WLP, NW1, IRL1 and ERIWSTA with Gaussian noise with $\sigma=10^{-3}$.} 
	\label{FIG:ImageLowNoise}
\end{figure}

\begin{figure*}
	\centering
	\subfigure[]{
		\includegraphics[scale=.33]{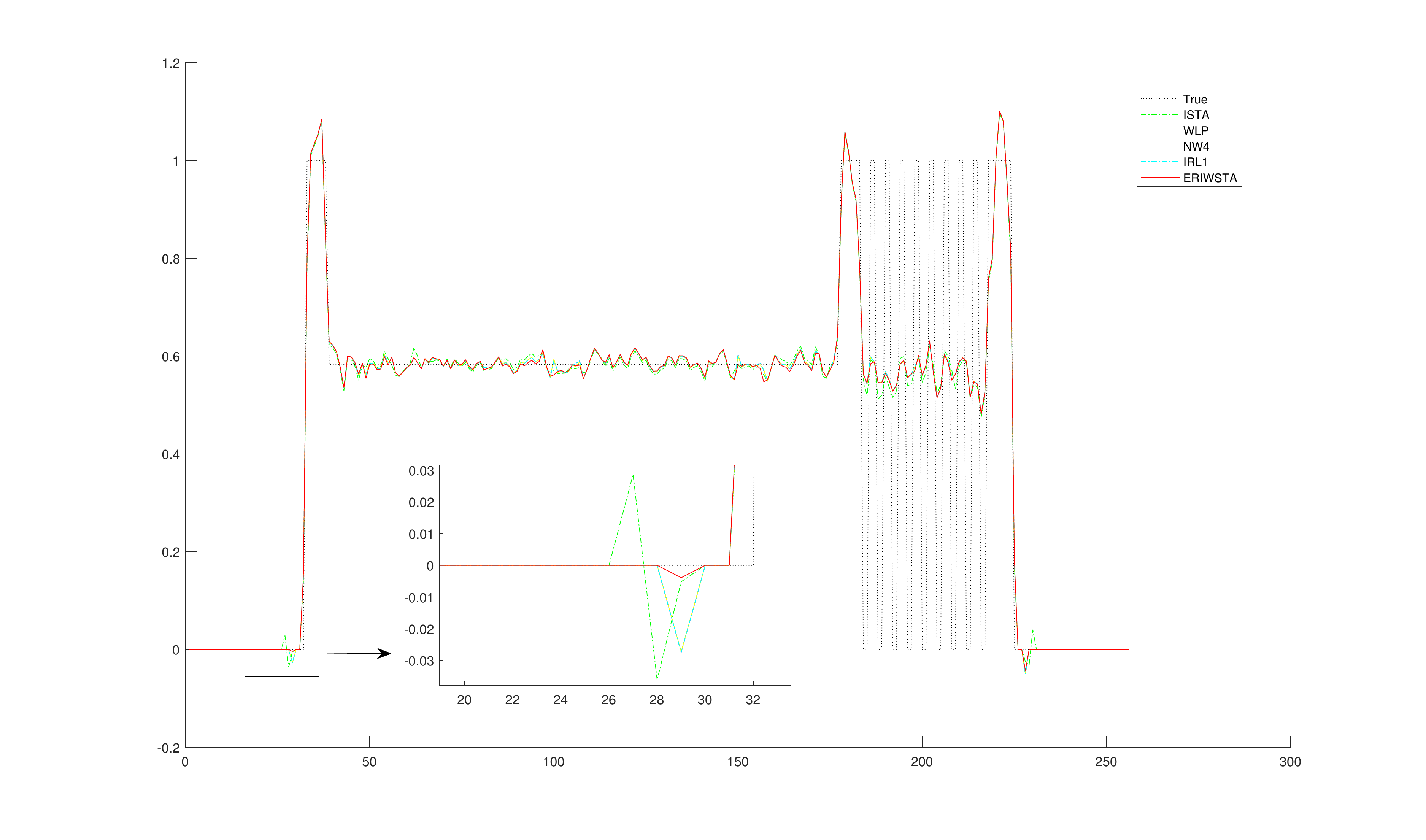}
	}
	\subfigure[]{
		\includegraphics[scale=.33]{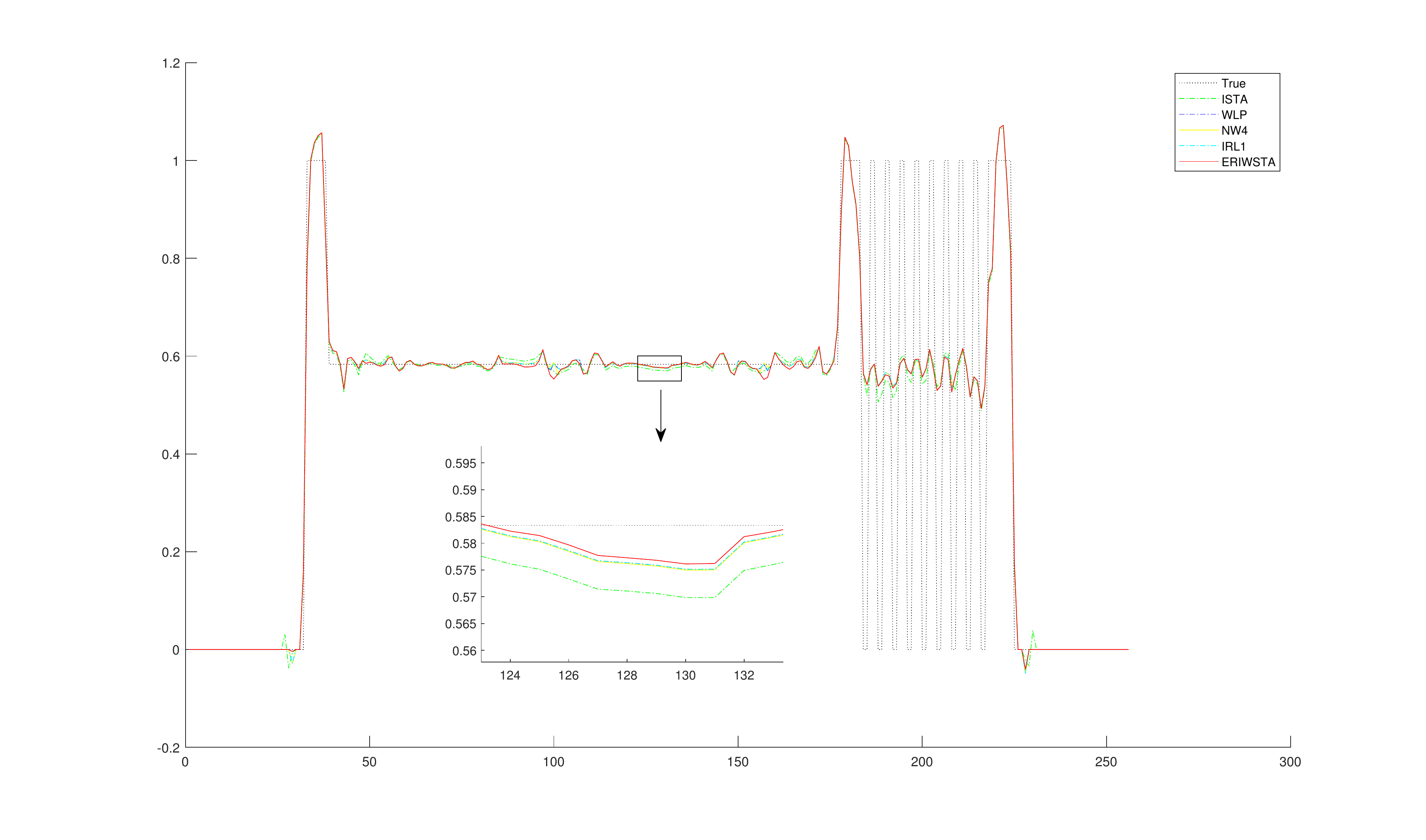} 
	}
	
	\caption{Horizontal central profiles of the restored images with different Gaussian noise levels: (a) $\sigma=10^{-2}$ and (b) $\sigma=10^{-3}$..}
	\label{fig:centralline1}
\end{figure*}

\begin{figure*}
	\centering
	\subfigure[]{
		\includegraphics[scale=.33]{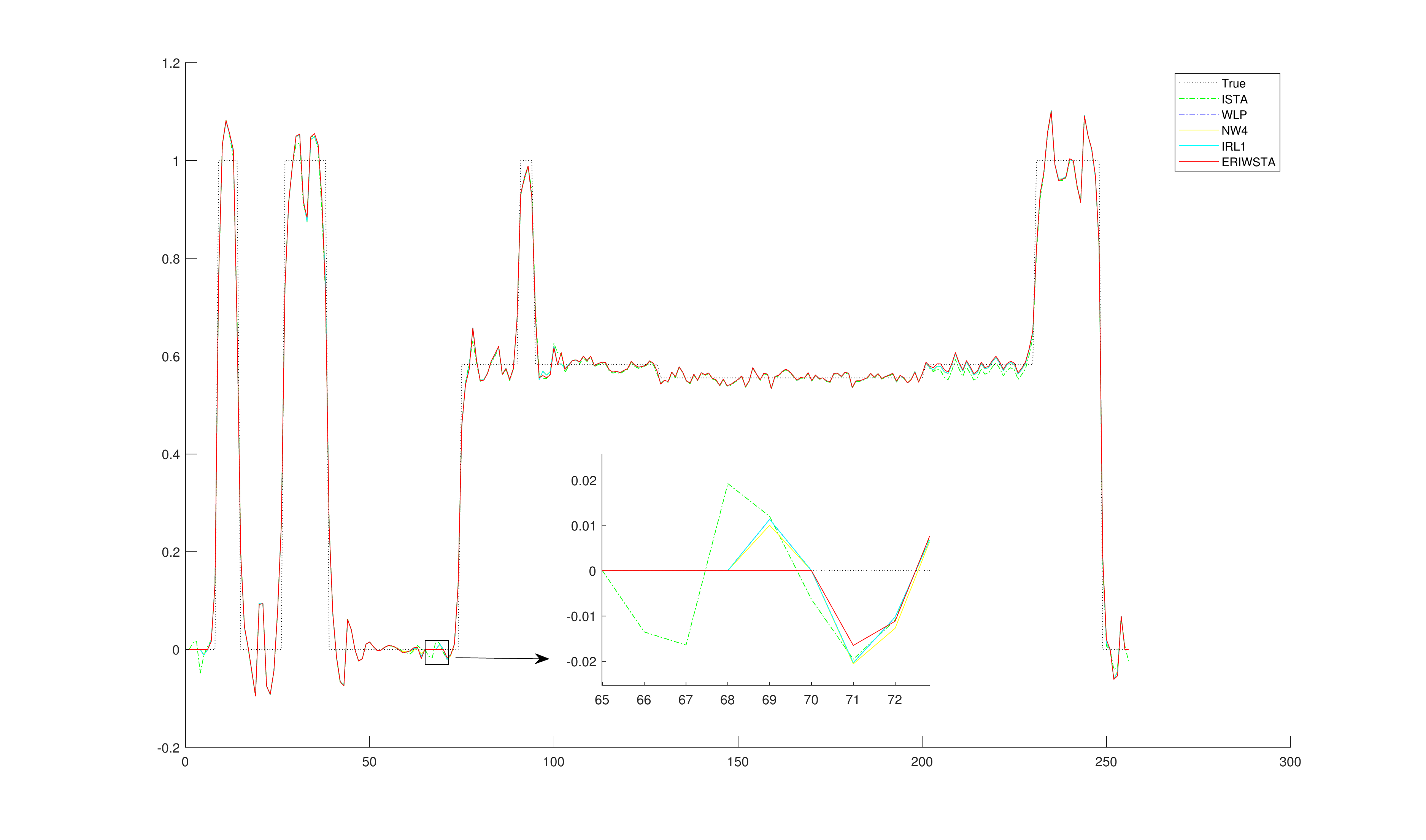}
	}
	\subfigure[]{
		\includegraphics[scale=.33]{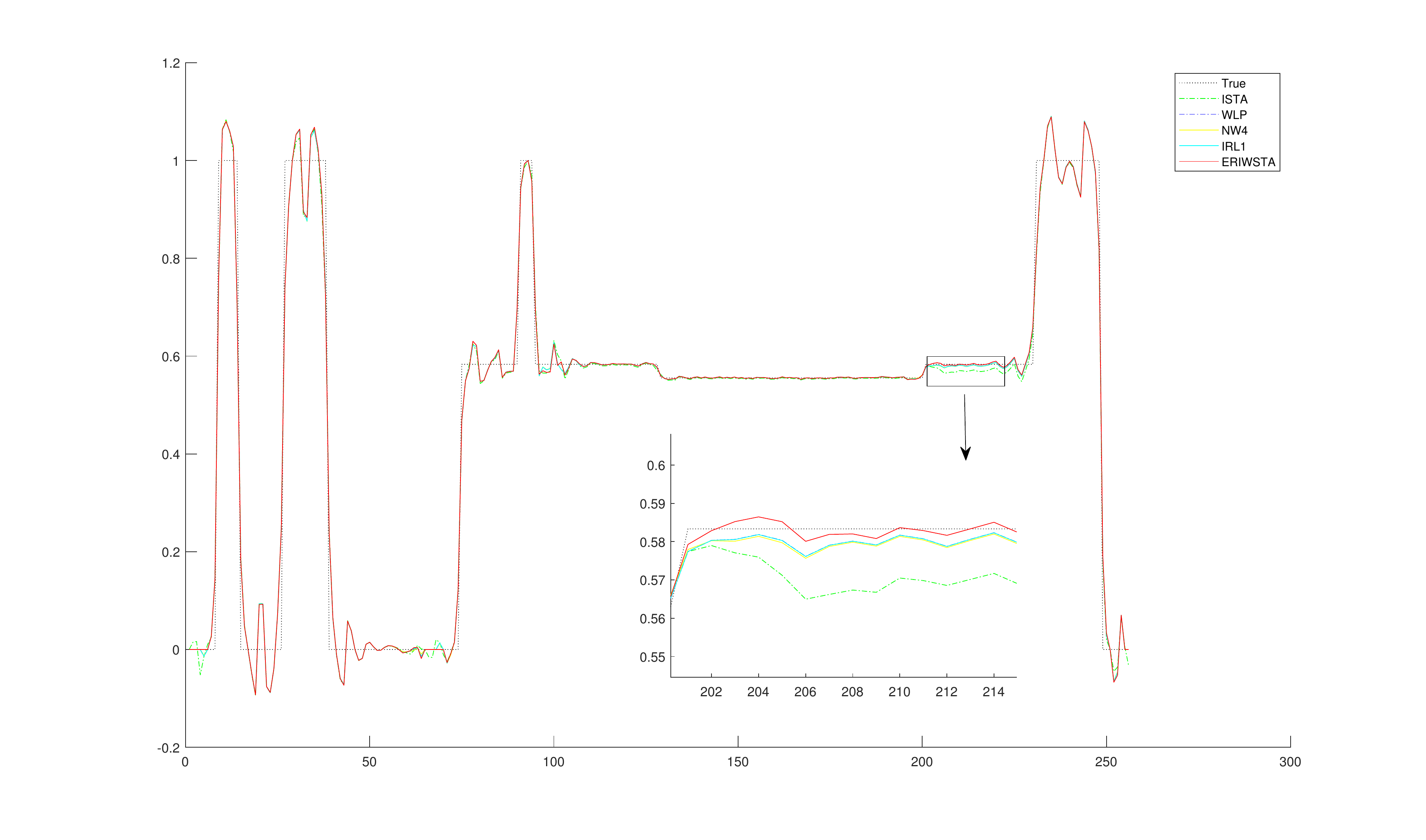} 
	}
	
	\caption{Vertical central profiles of the restored images with different Gaussian noise levels: (a) $\sigma=10^{-2}$ and (b) $\sigma=10^{-3}$.}
\label{fig:centralline2}
\end{figure*}

\section{Numerical experiments}
Numerical experiments are provided to evaluate the performance of the proposed ERWISTA compared with ISTA, WLP, NW4 and IRL1 on the denoising problem of computed tomography (CT) images. All experiments are performed on an HP computer with a 2.5 GHz Intel(R) Core(TM) i7-4710MQ CPU with 12 GB of memory using MATLAB R2019a for coding. A simulated Shepp-Logan phantom with $256\times256$ pixels was used to evaluate the algorithm performance, which is usually used in CT image analysis. There are many advantages to using simulated phantoms, including prior knowledge of the pixel values and the ability to control noise. We blurred the image by using a uniform $5\times5$ kernel (applied by the MATLAB function "\emph{fspecial}") and then added Gaussian noise by the following formula. We select $\sigma=10^{ - 2}$ and ${10^{ - 3}}$ as examples of high and low noise levels for the following experiments.
\begin{equation}
	x^{noise}=x^{true}+N(0,\sigma)
\end{equation}

Fig. \ref{FIG:1} shows the original and blurred-and-noisy images. Based on the good time-frequency localization characteristics of the wavelet transform, it can effectively distinguish high-frequency noise from low-frequency information. Therefore, the wavelet transform is used to reduce noise. The introduction of the wavelet matrix can also ensure the sparsity of the whole optimization algorithm. Without losing generality, let $A = PW$, where $P$ is the predetermined system matrix indicating the blurring information and $W$ represents the second-order Haar wavelet matrix.

Mean absolute error (MAE) was used to measure the similarity to the true image. The value of the MAE was calculated by taking the average of the squared differences between the restored pixel values and the true pixel values.
\begin{equation} 	
	MAE = \frac{1}{N}{||x^{restoration} - x^{true} ||_1}
\end{equation}

\subsection{Hyperparameter selection}

To select the penalty hyperparameter $\beta$ and the entropy weighted hyperparameter $\gamma$, we compare the MAE value after 100 iterations with respect to them from $10^{ - 10}$ to $10^{ 10}$. The results are shown
in Fig. \ref{FIG:hyperparameter}, demonstrating that ERIWSTA can achieve a consistently low MAE value over a wide range of $\beta$ and $\gamma$, which displays its robustness.

We also quantitatively display the optimal MAE and corresponding hyperparameters of the algorithms in Tabs. \ref{tbl2} and \ref{tbl3}. An interesting observation is that, regardless of low or high noise levels, the restoration accuracy of our algorithm is always better than the others. These optimal hyperparameters are also used in the following experiments.

\subsection{Algorithmic performance}
Fig. \ref{FIG:CostCurve} displays the cost function of the algorithms. As can be seen, the proposed algorithm always have the fast convergence speed, which arrive at the stable status early.

Fig. \ref{FIG:MAECurve} shows the MAE curves of the algorithms with respect to the number of iterations. The proposed ERIWSTA always has superior performance to the other algorithms, rapidly obtaining the minimum MAE value.

Figs. \ref{FIG:ImageHighNoise} and \ref{FIG:ImageLowNoise} indicate the denoising results with the given noise level. As can be seen, all of the algorithms achieve a similar image. Howver, Figs. \ref{fig:centralline1} and \ref{fig:centralline2}  quantitatively compares the vertical profiles of the
restored images with that of the true phantom in the central row and column.
We can see that ERIWSTA 
follows the outline of the phantom more accurately than the other algorithms.

\section{Conclusions}
In this paper, a new IWSTA type, called ERIWSTA, is proposed to solve the linear inverse problem. An entropy weighted term is introduced to measure the certainty of the weights, and then the Lagrange multiplier method is used to obtain a simple solution. The experimental results on image restoration of a synthetic CT head phantom show that ERIWSTA can achieve convergence faster with fewer iterations and better restoration accuracy than the other algorithms.

However, as with many existing algorithms, our algorithm also involves two main hyperparameters ($\beta$ and $\gamma$). In the future, we will focus on designing an automatic method to adjust these hyperparameters.

\section*{Acknowledgments}

This work was supported by the Fundamental Research Funds for the Central Universities (N2019006 and N180719020).

\printcredits

\end{document}